\colorlet{linecol}{black!75}
\colorlet{mhpurple}{Plum!80}
\journal{}
\newtheoremstyle{mystyle}
{3pt}    
{3pt}    
{}       
{}       
{\bfseries} 
{.}      
{.5em}   
{}       
\theoremstyle{mystyle}             
\newtheorem{definition}{Definition}
\newtheorem{theorem}{Theorem}
\newtheorem{lemma}{Lemma}
\newtheorem{property}{Property}
\begin{document}

\begin{frontmatter}

\title{Grey-informed neural network for time-series forecasting} 


\author[label1]{Wanli  Xie}
\author[label4]{Ruibin Zhao}
\author[label1]{Zhenguo Xu  \corref{cor1}}
\cortext[cor1]{Corresponding  author.}\ead{qfnuxzg@163.com}
\author[label1]{Tingting Liang}
\address[label1]{School  of  Communication,  Qufu  Normal  University,  Rizhao, China}
\address[label4]{School of Computer Science and Information Engineering, Chuzhou Univeristy, Chuzhou, China}


\begin{abstract}
Neural network models have shown outstanding performance and successful resolutions to complex problems in various fields. However, the majority of these models are viewed as black-box, requiring a significant amount of data for development. Consequently, in situations with limited data, constructing appropriate models becomes challenging due to the lack of transparency and scarcity of data. To tackle these challenges, this study suggests the implementation of a grey-informed neural network (GINN). The GINN ensures that the output of the neural network follows the differential equation model of the grey system, improving interpretability. Moreover, incorporating prior knowledge from grey system theory enables traditional neural networks to effectively handle small data samples. Our proposed model has been observed to uncover underlying patterns in the real world and produce reliable forecasts based on empirical data.
\end{abstract}
\begin{keyword}
Grey system model\sep Fractional derivative\sep Neural network  \sep Time series forecasting

\end{keyword}
\end{frontmatter}

   \begin{spacing}{0.3} 
\tableofcontents
   \end{spacing}
\section{Introduction}
Due to continuous advancements, neural networks have showcased remarkable capabilities across various fields, greatly contributing to the progress of artificial intelligence technologies. Artificial neural network models have been successfully employed in time series modeling \cite{ismail2019deep}, image processing \cite{wang2020deep}, and natural language processing \cite{bai2021speaker}. In order to effectively tackle complex practical problems, numerous scholars have made fruitful improvements to artificial neural network models, resulting in a multitude of significant findings \cite{bottcher2022ai}. Among various research domains, the prediction of time utilizing artificial neural networks has garnered significant scholarly attention \cite{wang2022improved}. Present research extensively utilizes the robust modeling capabilities of neural networks to analyze real-world data and forecast trends within time series. However, mainstream neural network models require large volumes of modeling data and lack interpretability. Consequently, numerous scholars have proposed effective techniques to address these challenges. With the advancement of grey system theory \cite{wei2023nonlinear, xie2023novel}, it is anticipated that this issue will be effectively addressed. Data modeling in small sample environments can be effectively addressed through the use of grey system theory. The concept of a white system denotes a system with complete information, while a grey system contains a mix of known and unknown information, and a black system is entirely unknown. Predicting system behavior does not rely on prior data in the grey system approach.
Within grey system theory, certain real-world development patterns are attributed to grey system dynamics. By employing specific operators, the underlying laws governing system evolution can be fully elucidated. This modeling approach is currently experiencing rapid advancement, distinguishing itself from traditional statistical and fuzzy mathematical methods in its unique methodology. Figure \ref{fig:graph_one_order} shows the relationship between the three systems and the amount of data.
Chen et al. propose the Grey Neural Network (GNN), which combines grey system theory with neural network methods. The GNN model incorporates a grey layer before the neural input layer and n white layers after the neural output layer. By leveraging the strengths of both the grey model and neural network, the GNN achieves enhanced precision in forecasting. The GNN presents a promising approach for accurate predictions in various applications \cite{chen2003traffic}.
Wu et al. introduce a novel wave energy forecast model that combines an improved grey BPNN with a modified ensemble empirical mode decomposition (MEEMD)-autoregressive integrated moving average (ARIMA) approach. This integration allows the proposed model to achieve higher accuracy in predicting wave energy output \cite{wu2021combined}.
Lei et al. propose a new grey forecasting model called the neural ordinary differential grey model (NODGM), inspired by neural ordinary differential equations (NODE). Leveraging the latest techniques in NODE research, the NODGM model represents an innovative approach to grey forecasting with potential for practical application \cite{lei2021neural}.
Ma et al. adopt the concept of "Grey-box" modeling to maximize the benefits of a deterministic structure. They develop a neural grey system model that combines existing information with novel neural network techniques. The Levenberg-Marquardt algorithm is employed to facilitate model training, while Bayesian regularization is utilized to automatically adjust the regularized parameter, further enhancing the accuracy and robustness of the proposed model \cite{ma2021novel}.
Xie et al. present a fractional-order neural grey system model with a three-layer structure to maximize the advantages of each element. The input of the network is a fractional-order cumulative sequence, while the output is a predicted value. By merging these techniques with traditional grey system theory, the model offers a broader and more comprehensive understanding of the system under study \cite{xie2023fractional}. As a result of the Richards equation introduced by He et al., grey modeling has been conducted to simulate traffic parameter development trends. The method is then fused with the ability to adjust error feedback as well as complex nonlinear fitting of neural networks in order to estimate the grey model parameters and forecast traffic volatility \cite{he2023neural}. 

In order to train neural networks, a large amount of data is required, and neural networks must search the data for patterns completely. Insufficient data, however, makes it impossible to develop a suitable model. It is therefore necessary to have a large amount of data in order to train neural networks using data-driven techniques. As a result of the efforts of researchers, this problem is being addressed. To guide neural network training, some scholars have considered using physics laws as prior knowledge in order to ensure that the inputs and outputs of the model meet certain physics laws \cite{raissi2019physics}. It should also be noted that such ideas are also well suited for solving prediction problems. According to Du et al., a new underwater acoustic field prediction method based on a physics-informed neural network, called UAFP-PINN, has been developed \cite{du2023research}.  Li et al. developed a physics-informed dynamic mode decomposition method (piDMD), which incorporates the mass conservation law into a purely data-driven dynamic mode decomposition method. The spatial and temporal dynamics of solid volume fraction distribution in bubbling fluidized beds can be quickly predicted \cite{li2024physics}. Furthermore, there are also some novel methods that integrate prior knowledge into neural network models, such as embedding causal relationships into neural network models to solve thorny engineering issues \cite{iglesias2024causally}. In this paper, we consider embedding the grey system model into the neural network model from the perspective of systems theory, allowing the neural network to not only mine patterns from the data, but also ensure that the inputs and outputs of the neural network meet the patterns described by the grey system.

The research is structured into several distinct sections. The second section of the paper provides essential background information, including an in-depth analysis of the grey prediction model. Section 3 presents the GINN model. The fourth section introduces a modified version of the GINN model utilizing fractional order difference. The fifth and sixth sections demonstrate the practical application of the model in real-world scenarios and provide evidence of its effectiveness. A comprehensive summary of the entire text is presented in the seventh section as a conclusion.
\begin{table}[htbp]\scriptsize
	\centering
	\begin{tabular}{|ccl|}
		\hline
		\multicolumn{3}{|l|}{\textbf{Nomenclature}} \\
				$\mathcal{T}$ & Sample space & The set of all possible outcomes of an experiment \\ 
				$\mathcal{D}$ & Label space & The set of possible labels or output values \\ 
				$\theta$ & Parameters & The weights and biases of the neural network model \\
				$\xi$ & Weighting coefficient & A coefficient to balance the errors of the neural network and grey prediction models \\
				$\alpha$ & Fractional order & The fractional order used in the differential equations of the grey prediction model \\ 
				$\beta$ & Parameter & A positive parameter in the truncated M-fractional operators \\ 
				$a, b$ & Indices & Indices for arrays or sequences \\
				$x(t), y(t)$ & Observation sequences & The original sequence and the output or prediction sequence for time $t$ \\
				$f(t)$ & Function & A function of the values at time $t$ \\
				$D$ & Data distribution & The distribution of training data \\
				$L^{ALL}$ & Total error & The sum of the error terms from the neural network model and the grey system model \\
				$L^{NN}$ & Neural network error & The error term of the neural network model \\
				$L^{GM}$ & Grey system error & The error term of the grey system model \\
				\hline
			\end{tabular}%
	\end{table}
	\printnomenclature
\begin{figure}
	\centering
	\includegraphics[scale=0.4]{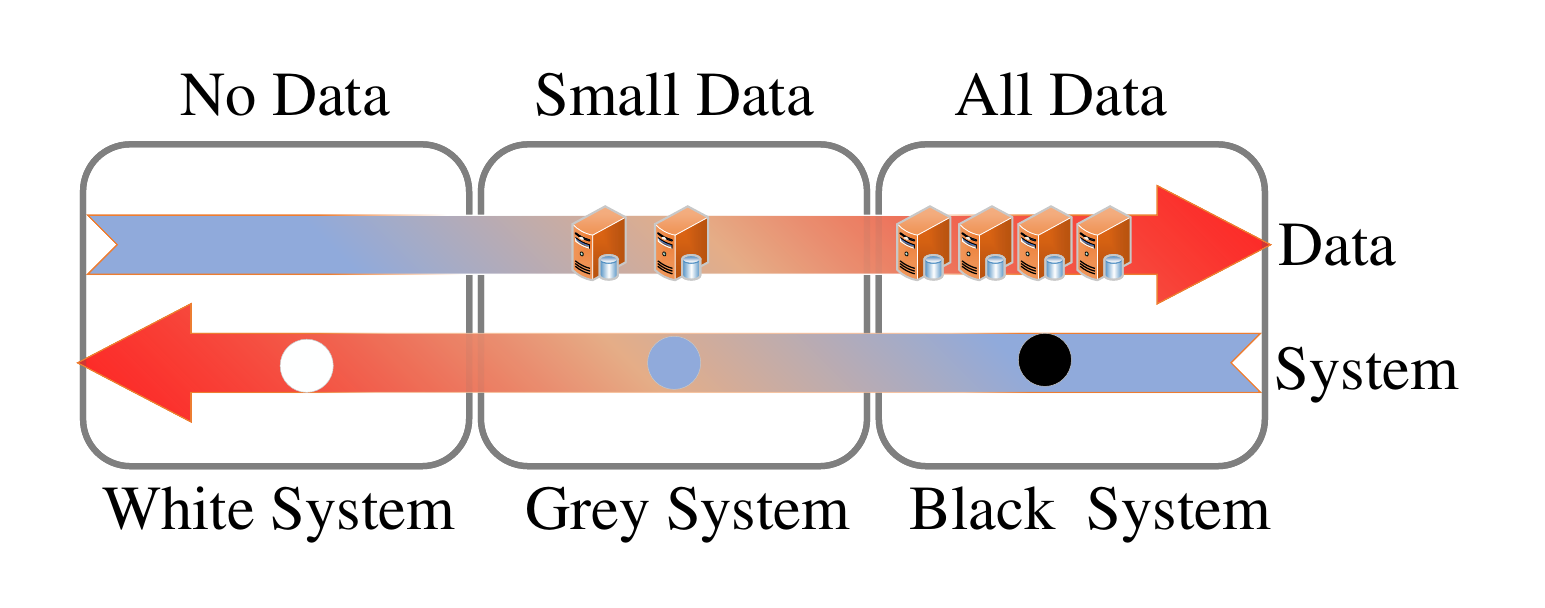}
	\caption{The relationship between the white system, grey system, and black system is intrinsically linked to the quantity of available data. When full knowledge of a system is attained, data collection becomes unnecessary to understand its operational principles. However, when only partial information is available, a grey system is required to detail the operational rules of the system.}	
	\label{fig:graph_one_order}
\end{figure}
\label{sec:introduction}
\section{Preliminaries}
Throughout this paper, ${\mathbb{N}_a} = \{ a,a + 1,a + 2, \ldots \} ,\mathbb{N}_a^d = \{a,a + 1,a + 2, \ldots ,d\}$, where $a,d \in \mathbb{R}, d - a \in {\mathbb{N}_1}$. $C ^m([0,+\infty), \mathbb{R} )$ represents a set of continuous m-order differential function from  $C ^m([0,+\infty)$ into $\mathbb{R}$.
\subsection{Truncated M-fractional derivative}
\begin{definition}[\cite{vanterler2018new}]
	The one-parameter truncated Mittag-Leffler function is defined as
	\begin{equation}
		{ }_i \mathbb{E}_\beta(z)=\sum_{k=0}^i \frac{z^k}{\Gamma(\beta k+1)},
	\end{equation}
	where $\beta>0$ and $z \in \mathbb{C}$.
\end{definition}
\begin{definition}[\cite{vanterler2018new}]
		The truncated M-fractional derivative of the function $f:[0,\infty) \to \mathbb{R}$ of order $\alpha$ is denoted as follows:
	\begin{equation}
	D_\beta ^\alpha (x):=\lim _{\varepsilon \rightarrow 0} \frac{f\left(x_i \mathbb{E}_\beta\left(\varepsilon x^{-\alpha}\right)\right)-f(x)}{\varepsilon}
	\end{equation}
	for all  $x>0$,  ${\rm{0 < }}\alpha {\rm{ \leq 1}}$ and $\beta>0$.
\end{definition}
\begin{theorem}[\cite{vanterler2018new}]If f is differentiable and  $x>0$, then
	\begin{equation}
	D_\beta ^\alpha (x)=\frac{x^{1-\alpha}}{\Gamma(\beta+1)} \frac{d f(x)}{d x},
	\end{equation}
	where $\Gamma(\cdot)$ is the Gamma function.
\end{theorem}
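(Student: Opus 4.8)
The plan is to exploit the fact that as $\varepsilon \to 0$ the argument $\varepsilon x^{-\alpha}$ of the truncated Mittag-Leffler function tends to zero, so that only the lowest-order terms of its defining sum survive. First I would expand ${}_i\mathbb{E}_\beta(z)$ about $z=0$. Since $\Gamma(1)=1$, the $k=0$ term equals $1$ and the $k=1$ term equals $z/\Gamma(\beta+1)$, so that
\begin{equation}
{}_i\mathbb{E}_\beta(z) = 1 + \frac{z}{\Gamma(\beta+1)} + \sum_{k=2}^{i}\frac{z^k}{\Gamma(\beta k+1)} = 1 + \frac{z}{\Gamma(\beta+1)} + O(z^2),
\end{equation}
where the finiteness of the sum (it runs only up to $k=i$) guarantees the $O(z^2)$ bound with no convergence issues to check.

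Next I would substitute $z=\varepsilon x^{-\alpha}$ and multiply through by $x$, using $x>0$ so that $x^{-\alpha}$ and $x^{1-\alpha}$ are well defined, to obtain
\begin{equation}
x\,{}_i\mathbb{E}_\beta(\varepsilon x^{-\alpha}) = x + \frac{x^{1-\alpha}}{\Gamma(\beta+1)}\,\varepsilon + O(\varepsilon^2).
\end{equation}
Setting $h(\varepsilon) := x\,{}_i\mathbb{E}_\beta(\varepsilon x^{-\alpha}) - x$, I record the two facts that drive the argument: $h(\varepsilon)\to 0$ as $\varepsilon\to 0$, and $h(\varepsilon)/\varepsilon \to x^{1-\alpha}/\Gamma(\beta+1)$.

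The final step invokes differentiability of $f$ at $x$. By definition, $f(x+h)-f(x) = f'(x)\,h + r(h)$ with $r(h)/h \to 0$ as $h\to 0$. Substituting $h=h(\varepsilon)$ into the difference quotient defining $D_\beta^\alpha$ gives
\begin{equation}
\frac{f\bigl(x+h(\varepsilon)\bigr)-f(x)}{\varepsilon} = f'(x)\,\frac{h(\varepsilon)}{\varepsilon} + \frac{r\bigl(h(\varepsilon)\bigr)}{h(\varepsilon)}\cdot\frac{h(\varepsilon)}{\varepsilon},
\end{equation}
and letting $\varepsilon\to 0$ yields the claimed value $\dfrac{x^{1-\alpha}}{\Gamma(\beta+1)}f'(x)$.

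I expect the main obstacle to be the remainder control in this last step. Because $f$ is only assumed differentiable — not $C^1$ or analytic — one cannot naively Taylor-expand $f$ to second order and must instead carry the little-$o$ remainder $r(h)$ explicitly, verifying that the second term above vanishes. This works because $h(\varepsilon)/\varepsilon$ stays bounded (indeed convergent) while $r(h(\varepsilon))/h(\varepsilon)\to 0$, the latter being legitimate precisely because $h(\varepsilon)\to 0$ as a genuine limit; this is exactly where the structure of the truncated Mittag-Leffler expansion feeds in, and care is needed to ensure the composition of limits is justified rather than assumed.
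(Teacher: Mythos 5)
Your proof is correct, and since the paper does not prove this theorem at all---it is imported with a citation from the reference where it originates---the natural comparison is with that source's argument, which yours matches: expand the truncated Mittag--Leffler function to first order so that the argument of $f$ becomes $x+\varepsilon x^{1-\alpha}/\Gamma(\beta+1)+O(\varepsilon^{2})$ (a finite sum, so no convergence issues), then invoke differentiability of $f$ in its little-$o$ form. The single point worth stating explicitly is that writing $r(h(\varepsilon))/h(\varepsilon)$ presupposes $h(\varepsilon)\neq 0$, which does hold for all sufficiently small $\varepsilon\neq 0$ because $h(\varepsilon)/\varepsilon\to x^{1-\alpha}/\Gamma(\beta+1)>0$ (alternatively, define the remainder quotient to be $0$ at $h=0$), after which your composition of limits is fully justified.
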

\begin{definition}[\cite{vanterler2018new}]
 Let $a \geq 0$ and $t \geq a$, the truncated $M$-fractional integral of order $\alpha$ for the function $f$ is formally defined by
	\begin{equation}
		\left(I_a^{\alpha, \beta} f\right)(t)=\Gamma(\beta+1) \int_a^t \frac{f(x)}{x^{1-\alpha}} d x
	\end{equation}
	for  $\beta>0$ and $0<\alpha<1$.
\end{definition}
\subsection{A brief introduction to the grey prediction model}
\label{1AGO_network}
The purpose of this subsection is to introduce the basic knowledge of grey prediction models in a brief manner.
\begin{definition}
	Let $f:{\mathbb{N}_a} \to \mathbb{R}$, the first-order difference is defined as
	\begin{equation}
		\nabla f(k) := f(k) - f(k - 1)
	\end{equation}
for $k \in {\mathbb{N}_{a + 1}}$.
\end{definition}
\begin{definition}
Set $f:{\mathbb{N}_{a + 1}} \to \mathbb{R}$ and $b \in {\mathbb{N}_a}$, then the discrete integral of $f$ is defined as
\begin{equation}
\nabla^{-1} f(k)=	\int\limits_a^b {f(k)\nabla k: = \mathop \sum \limits_{k = a + 1}^b f(k)}
\end{equation}
for $k \in {\mathbb{N}_a}$.
Specifically, we have
\begin{equation}
\nabla^{-1} f(k)=	\int _a^af(k)\nabla k: = \mathop \sum \limits_{k = a + 1}^a f(k)= 0.
\end{equation}
\end{definition}
Using the aforementioned definition as a starting point, we will now proceed to present the concept of GM(1,1) \cite{liu2016new}.
\begin{definition}
	The GM(1,1) model in continuous form can be represented as
	\begin{equation}
		\left\{ {\begin{array}{*{20}{l}}
				\begin{array}{l}
					{y}(t) = \int_1^t {{x^{}}(\tau )d} \tau ,\\
					\frac{d}{{dt}}{y^{}}(t) + {a}{y^{}}(t) = {b},
				\end{array}\\
				{{y^{}}(1) = {x^{}}(1).}
		\end{array}} \right.
		\label{CGM11}
	\end{equation}
\end{definition}
The solution to equation (\ref{CGM11}) can be obtained by performing calculations as 
\begin{equation}
	{y{}}(t) = \left( {{x^{}}(1) - \frac{b}{a}} \right){e^{ - a(t - 1)}} + \frac{b}{a}.
\end{equation}
In order to estimate parameters of the GM(1,1) model, it is necessary to discretize it.
\begin{definition}
	Let ${{X}^{(0)}} = \left\{ {{x^{}}(1),{x^{}}(2), \cdots ,{x^{}}(n)} \right\}$  is the original time series, then the GM(1,1) model in discrete form can be defined as
	\begin{equation}
\left\{ {\begin{array}{*{20}{l}}
		{\begin{array}{*{20}{l}}
				{{\nabla ^{ - 1}}x(k) = \sum\limits_{\tau  = 1}^k {x(\tau )},}\\
				{x(k) + a{\nabla ^{ - 1}}x(k) = b,}
		\end{array}}\\
		{{\nabla ^{ - 1}}x(1)= x(1)}
\end{array}} \right.
	\end{equation}
for $k \in \mathbb{N}_1^n$.
\end{definition}
Using the least squares method, the parameters of the GM(1,1) model can be determined as follows:
\begin{equation}
	{[\hat a,\hat b]^T} = {\left( {{B^T}B} \right)^{ - 1}}{B^T}Y,
\end{equation}
where
\begin{equation}
B = \left[ {\begin{array}{*{20}{c}}
		{ - \frac{1}{2}\left( {{\nabla ^{ - 1}}x(2) + {\nabla ^{ - 1}}x(1)} \right)}&1\\
		{ - \frac{1}{2}\left( {{\nabla ^{ - 1}}x(3) + {\nabla ^{ - 1}}x(2)} \right)}&1\\
		\vdots & \vdots \\
		{ - \frac{1}{2}\left( {{\nabla ^{ - 1}}x(n) + {\nabla ^{ - 1}}x(n - 1)} \right)}&1
\end{array}} \right],
\end{equation}
\begin{equation}
Y = \left[ {\begin{array}{*{20}{c}}
		{{\nabla ^{ - 1}}(2) - {\nabla ^{ - 1}}(1)}\\
		{{\nabla ^{ - 1}}(3) - {\nabla ^{ - 1}}(2)}\\
		\vdots \\
		{{\nabla ^{ - 1}}(n) - {\nabla ^{ - 1}}(n - 1)}
\end{array}} \right].
\end{equation}
Based on the estimated parameters and the discrete response function, the predicted values of the series can be calculated as follows:
\begin{equation}
	{\nabla ^{ - 1} \hat x^{}}(k) = \left( {{x^{}}(1) - \frac{{\hat b}}{{\hat a}}} \right){e^{ - \hat a(k - 1)}} + \frac{{\hat b}}{{\hat a}},k \in \mathbb{N}_1^m.
\end{equation}
As a result, the restored values can be written as follows:
\begin{equation}
	{\nabla ^{ - 1} \hat x^{}}(k) = {\nabla ^{ - 1} \hat x^{}}(k) - {\nabla ^{ - 1} \hat x^{}}(k - 1),k \in \mathbb{N}_2^m.
\end{equation}
\begin{lemma}[\cite{Bainov1992}] Set $a, b \in C(\left[t_0, T\right], \mathbb{R} )$, and  $x \in C^1\left(\left[t_0, T\right], \mathbb{R} \right)$ satisfies
\begin{equation}
	\left\{ {\begin{array}{*{20}{l}}
			{\frac{{{\rm{d}}x(t)}}{{dt}} \le a(t)x(t) + u(t),t \in \left[ {{t_0},T} \right],}\\
			{x\left( {{t_0}} \right) \le {x_0}.}
	\end{array}} \right.
\end{equation}
Then
\begin{equation}
	x(t) \leq x_0 \exp \left(\int_{t_0}^t a(\rho) d \rho\right)+\int_{t_0}^t \exp \left(\int_\rho^t a(r) d r\right) u(\rho) d \rho, \quad t \in\left[t_0, T\right] .
\end{equation}
\end{lemma}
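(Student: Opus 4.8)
The plan is to prove this classical differential Gronwall--Bellman inequality by the method of integrating factors, reducing the differential inequality to a statement about an exact antiderivative that can be integrated directly. The key observation is that, although we are given only an inequality rather than an equation, multiplying through by a strictly positive integrating factor preserves the direction of the inequality, so the very manipulation that \emph{solves} the associated linear ODE $x' = a(t)x + u(t)$ will instead \emph{bound} the solution of the inequality from above.

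First I would introduce the integrating factor
\begin{equation}
\mu(t) := \exp\left(-\int_{t_0}^t a(\rho)\, d\rho\right),
\end{equation}
which is continuously differentiable and strictly positive on $[t_0, T]$, with $\mu(t_0) = 1$ and $\mu'(t) = -a(t)\mu(t)$. Multiplying the hypothesis $x'(t) - a(t)x(t) \le u(t)$ by $\mu(t) > 0$ retains the inequality, and the product rule gives $\frac{d}{dt}\big(\mu(t)x(t)\big) = \mu(t)\big(x'(t) - a(t)x(t)\big)$, so that
\begin{equation}
\frac{d}{dt}\big(\mu(t)x(t)\big) \le \mu(t)u(t), \qquad t \in [t_0, T].
\end{equation}

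Next I would integrate this relation from $t_0$ to $t$. Since the left-hand side is an exact derivative, the fundamental theorem of calculus yields $\mu(t)x(t) - x(t_0) \le \int_{t_0}^t \mu(\rho)u(\rho)\, d\rho$; invoking $\mu(t_0) = 1$ together with the initial bound $x(t_0) \le x_0$ then gives $\mu(t)x(t) \le x_0 + \int_{t_0}^t \mu(\rho)u(\rho)\, d\rho$. Dividing by $\mu(t) > 0$ and observing that $\mu(t)^{-1}\mu(\rho) = \exp\big(\int_\rho^t a(r)\, dr\big)$ recovers exactly the asserted bound.

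The argument is essentially routine once the integrating factor is in place; the only point that genuinely requires care is the handling of the inequality. I must confirm that both the multiplication by $\mu(t)$ and the final division by $\mu(t)$ preserve the inequality direction, which is guaranteed by the strict positivity of the exponential, and that the monotonicity of integration is applied correctly when passing from the pointwise bound on the derivative to the integrated bound. I would also note that the hypothesis lists $b \in C([t_0,T],\mathbb{R})$ whereas the forcing term appearing in the inequality is denoted $u$; I treat $u$ as the relevant continuous function so that all the integrals above are well defined, and no regularity beyond $x \in C^1$ and continuity of $a$ and $u$ is needed.
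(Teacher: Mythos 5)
Your proof is correct: the integrating factor $\mu(t)=\exp\left(-\int_{t_0}^t a(\rho)\,d\rho\right)$ argument, the preservation of the inequality direction under multiplication and division by the strictly positive $\mu$, and the identification $\mu(\rho)/\mu(t)=\exp\left(\int_\rho^t a(r)\,dr\right)$ are all handled properly, and you rightly flagged the typo in which the hypothesis names $b$ while the inequality uses $u$. Note that the paper itself gives no proof of this lemma---it is stated as a cited result from the reference \cite{Bainov1992}---so there is no in-paper argument to compare against; your integrating-factor strategy is, however, exactly the technique the paper later adapts (multiplying by a decaying exponential of an integral and integrating the resulting exact derivative) in its proof of the fractional Gronwall inequality, Theorem \ref{newfireau}.
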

\section{Grey-informed neural network}
\subsection{Model expression}
Let $\mathcal{X}$ denote the sample space and $\mathcal{Y}$ the label space. The distribution of training data on $\mathcal{X} \times \mathcal{Y}{\tiny }$ is denoted as $D$, with the training dataset $S=\left\{\left(x_i, y_i\right)\right\}_{i=1}^n$ comprising $n$ independent data points sampled from $D$. The model parameters are denoted by $\theta \in \Theta \subseteq \mathbb{R} ^d$. The open ball of radius $\rho>0$ centered at $\theta$ in Euclidean space is noted as $B( \theta , \rho)$, defined as $B( \theta , \rho)=\left\{ \theta ^{\prime}:\left\| \theta - \theta ^{\prime}\right\|<\rho\right\}$. The L2 norm is represented as $\|\cdot\|$.
The loss function per data point, denoted as $\ell: \Theta \times \mathcal{X} \times \mathcal{Y} \rightarrow \mathbb{R}$, is labeled as $\ell$. The empirical loss function $\hat{L}( \theta )$, is calculated as $\sum_{i=1}^n \ell\left( \theta , x_i, y_i\right)$. The gradient matrix of the function $\hat{L}(\cdot)$ at point $\theta$ are denoted as $\nabla \hat{L}( \theta )$. Furthermore, $L^{\text {oracle }}( \theta )$ is used in this paper to represent the oracle loss function. Building upon the aforementioned definition, we introduce a novel neural network error function, which is outlined as follows:
\begin{equation}
         {L^{ALL}}(\theta ) = \hat L(\theta ) + \xi {L^{GM}}(\theta )\label{baseloss}
\end{equation}
for  $ \xi  \in \left[ {0,1} \right]$, where ${L^{ALL}}(\theta)$ denotes the total error of the neural network, while $\hat L(\theta)$ represents the network's specific error. Additionally, ${L^{GM}}(\theta)$ signifies the error of the grey system, with $\xi$ serving as the weighting coefficient. The neural network model constructed with such an error function is called the GINN model.
The error function of our proposed GINN model departs significantly from that of traditional neural networks. Our novel error function is composed of two unique components: the first component represents the classic error term of neural networks, while the second component incorporates differential equations derived from grey systems. In practical computations, we will utilize a differential form for efficient computation. This dual-component approach serves a dual purpose: firstly, it allows neural networks to adhere to a data-driven mechanism, facilitating the extraction of objective patterns from data; secondly, it ensures that neural networks conform to the dynamic laws outlined by grey systems. By integrating the dynamic laws articulated by differential equations, our model introduces new prior knowledge into neural networks, enabling them to effectively model with less data.
In the neural network framework, $\hat{L}(\theta)$ can be written as $L^{NN}(\theta)$, so equation (\ref{baseloss}) can be further rewritten as
\begin{equation}
              {L^{ALL}}(\theta ) = {L^{NN}}(\theta ) + \xi {L^{GM}}(\theta ).
	\label{alllossann}
\end{equation}
If we use the Mean Squared Error (MSE) function \cite{wang2018improved} as the standard for calculating error, equation (\ref{alllossann}) can be further modified as 
\begin{equation}
                MSE = MSE _{ NN }+ \xi MSE _{ GM },
\end{equation}
where \begin{equation}
	{{\rm{MS}}{{\rm{E}}_{{\rm{NN}}}}{\rm{ = }}\frac{1}{n}\sum\limits_{i = 1}^n {{{\left| {{{y}}_{\rm{i}}^{}{\rm{ - }}{{{f}}_{\rm{i}}}} \right|}^{\rm{2}}}} ,}
	\label{errorann}
\end{equation}
\begin{equation}
	{{\rm{MS}}{{\rm{E}}_{GM}}{\rm{ = }}\frac{1}{n}\sum\limits_{i = 1}^n {{{\left| {{{y}}_{\rm{i}}^{}{\rm{ - }}{{{g}}_{\rm{i}}}} \right|}^{\rm{2}}}} .}
	\label{errorgm}
\end{equation}
In the context of a given multivariate time series input, i.e., a sliding window $X_i=\left[ x _{i-T+1}, \ldots, x _i\right] \in \mathbb{R} ^{N \times T}$, where at time $i$, the number of variables in the sequence is $N$ and the size of the sliding window is $T$, with $x _i \in \mathbb{R} ^N$ representing the corresponding $N$-dimensional multivariate values at time $i$. Therefore, the task of multivariate time series forecasting is to predict the next $\tau$ timestamps $Y_i=\left[x_{i+1}, \ldots, x_{i+\tau}\right] \in \mathbb{R} ^{N \times \tau}$ based on the historical $T$ observations. So the training set for time series problems is represented in this paper as $S = \left\{ {\left( {{{\left[ {{x_{i - T + 1}}, \ldots ,{x_i}} \right]}},{Y_i}} \right)} \right\}_{i = 1}^n$. Specifically, when $\tau  =1$, we have $S = \left\{ {\left( {{{\left[ {{x_{i - T + 1}}, \ldots ,{x_i}} \right]}},{x_{i + 1}}} \right)} \right\}_{i = 1}^n$.

\begin{figure}
	\centering
	\includegraphics[scale=0.8]{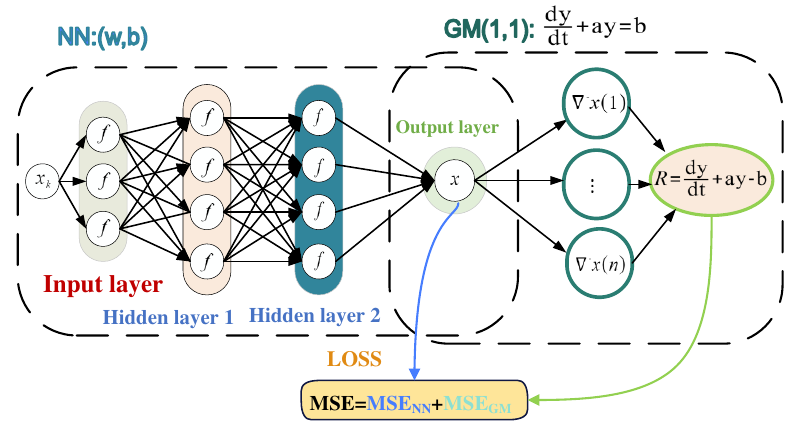}
	\caption{Schematic diagram of the network structure of GINN. The error in the proposed neural network model can be divided into two components. The first component stems from the disparity between the predicted values and the actual values within the neural network. The second component pertains to the error within the grey system model.}	
	\label{GMNN}
\end{figure}
\section{Grey-informed neural network with truncated M-fractional difference}
Based on the GINN model, we introduce a novel model known as fractional grey informed model in this section. This model is developed using a innovative fractional order accumulation operator.
\subsection{Definition of the truncated M-fractional accumulation and difference}
\begin{definition}
Considering an arbitrary function \(f: \mathbb{N}_a \rightarrow \mathbb{R}\), we define the truncated M-fractional difference (tM-D) in the following manner:
	\begin{equation}
		{\nabla ^\alpha }f(k): = \frac{{{k^{1 - \alpha }}}}{{\Gamma (\beta  + 1)}}\left( {f(k) - f(k - 1)} \right),k \in {\mathbb{N}_{a + 1}}.
	\end{equation}
\end{definition}

\begin{definition}
	Set $f: \mathbb{N}_a \rightarrow \mathbb{R}$, the truncated M-fractional accumulation (tM-A) can be defined as
	\begin{equation}
{\nabla ^{-\alpha} }f(k): = \Gamma (\beta  + 1)\int_{a + 1}^k {\frac{{f(k)}}{{{k^{1 - \alpha }}}}} \nabla t: = \Gamma (\beta  + 1)\sum\limits_{t = a}^k {\frac{{f(k)}}{{{k^{1 - \alpha }}}}} 
\label{tmaD}
	\end{equation}
for $k \in {\mathbb{N}_a}$, where $ g \leq h$ are in $\mathbb{N} _a$.
\end{definition}
\begin{property}
	Let	$F(t): = \Gamma (\beta  + 1)\int_{b+1}^t {\frac{{f(s)}}{{{x^{1 - \alpha }}}}} \nabla s$, for $ t \in \mathbb{N}_b^a$, then we have $\nabla^\alpha F(t)=f(t)$,
	where $t \in \mathbb{N}_{b+1}^a$.
	
	\begin{proof}
		Set $F(t)=\int_b^t f(s) \nabla s, \quad t \in \mathbb{N} _b^a$, then
		\begin{equation}
			\begin{aligned}
				& \nabla^\alpha F(t)=\nabla^\alpha\left(\Gamma(\beta+1) \int_{a+1}^t \frac{f(s)}{s^{1-\alpha}} \nabla s\right) \\
				& =\nabla^\alpha\left(\Gamma(\beta+1) \sum_{s=a}^t \frac{f(s)}{s^{1-\alpha}}\right) \\
				& =\frac{k^{1-\alpha}}{\Gamma(\beta+1)}\left(\Gamma(\beta+1) \sum_{s=a}^t \frac{f(s)}{s^{1-\alpha}}-\Gamma(\beta+1) \sum_{s=a}^{t-1} \frac{f(s)}{s^{1-\alpha}}\right) \\
				& =f(t)
			\end{aligned}
		\end{equation}
		for $t \in \mathbb{N} _{b+1}^a.$
	\end{proof}
\end{property}
By integrating new accumulation and difference operators, we propose a novel fractional order grey model, denoted as tM-FGM (1,1), which incorporates the fractional order integration operator. The primary objective of this study is to elucidate the fundamental architecture of the model. Parameter estimation and prediction methodologies employed in tM-FGM (1,1) are rooted in the GM (1,1) theory. Our aim is to establish a comprehensive framework for the application of this model.
\begin{definition}
		The proposed grey prediction mode with truncated M-fractional  integral in continuous form can be represented as
\begin{equation} 
\left\{ {\begin{array}{*{20}{c}}
		{y(t) = \Gamma (\beta  + 1)\int_a^t {\frac{{x(s)}}{{{s^{1 - \alpha }}}}} ds,}\\
		{y(t) + ay(t) = b,}\\
		{y(1) = x(1),}
\end{array}} \right.
\end{equation}	
where $y(t) = D_\beta ^\alpha I_a^{\alpha ,\beta }y(t)$.
\end{definition}
\begin{definition}
For $k \in \mathbb{N}_1^n$, the proposed grey prediction model in discrete form incorporating tM-A is represented as
\begin{equation}
	\left\{ {\begin{array}{*{20}{c}}
			{y(k) = \Gamma (\beta  + 1)\int_{b + 1}^{\rm{k}} {\frac{{x(\lambda)}}{{{\lambda^{1 - \alpha }}}}} \nabla \lambda,}\\
			{y(k) + ay(k) = b,}\\
			{y(1) = x(1).}
	\end{array}} \right.
\end{equation}
where ${\nabla ^\alpha }{\nabla ^{ - \alpha }}y(k)$.
\end{definition}
Our next step is to present a novel set of fractional Gronwall inequality to analyze the characteristics of solutions in grey systems according to the definition of fractional integration.
\begin{theorem}
Consider a non-negative, monotonically non-decreasing function $f(t)$ defined on the interval $[a, b]$, where $t \in [a, b]$. Let $g(t)$ be a non-negative function that satisfies
	\begin{equation}
		x(t) \le g(t) + \Gamma (\beta  + 1)\int_a^t f (\tau )x(\tau ){\tau^{\alpha  - 1}}d\tau,
	\end{equation}
	then
	\begin{equation}
\begin{aligned}
	x(t) \le g(t) + \int_a^t g (s)f(s) \exp [\Gamma (\beta  + 1)\int_s^t f (\tau ){\tau ^{\alpha  - 1}}d\tau ]{d_\alpha }s
\end{aligned}
		\label{equ:leamma1}
	\end{equation}
	for $t \in [0, + b ]$, where ${d_\alpha }{\rm{s = }}{s^{\alpha  - 1}}ds$.
	\label{newfireau}
\end{theorem}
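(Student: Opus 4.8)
The plan is to convert the stated singular integral inequality into the differential Gronwall inequality already recorded in the Bainov-type lemma of the Preliminaries, by absorbing the integral term into an auxiliary function. Concretely, I would introduce
\[
u(t):=\Gamma(\beta+1)\int_a^t f(\tau)\,x(\tau)\,\tau^{\alpha-1}\,d\tau ,
\]
so that the hypothesis reads $x(t)\le g(t)+u(t)$ together with the initial value $u(a)=0$. Assuming $x,f,g$ are continuous on $[a,b]$ (the weight $\tau^{\alpha-1}$ is integrable near the lower endpoint because $0<\alpha<1$), the fundamental theorem of calculus gives $u'(t)=\Gamma(\beta+1)f(t)\,x(t)\,t^{\alpha-1}$.

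Next I would substitute the pointwise bound $x(t)\le g(t)+u(t)$ back into this derivative. Since $f,g,x$ and the weight are all non-negative, this produces the linear differential inequality
\[
u'(t)\le p(t)\,u(t)+v(t),\qquad u(a)=0,
\]
with $p(t):=\Gamma(\beta+1)f(t)\,t^{\alpha-1}$ and $v(t):=\Gamma(\beta+1)f(t)\,g(t)\,t^{\alpha-1}$. This is precisely the setting of the Bainov-type lemma with $t_0=a$ and $x_0=0$.

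Invoking that lemma then yields
\[
u(t)\le\int_a^t \exp\!\Big(\int_\rho^t p(r)\,dr\Big)v(\rho)\,d\rho
=\Gamma(\beta+1)\int_a^t f(\rho)\,g(\rho)\exp\!\Big(\Gamma(\beta+1)\!\int_\rho^t f(r)\,r^{\alpha-1}\,dr\Big)\rho^{\alpha-1}\,d\rho ,
\]
and substituting into $x(t)\le g(t)+u(t)$, together with the abbreviation $d_\alpha\rho=\rho^{\alpha-1}\,d\rho$, reproduces the asserted estimate \eqref{equ:leamma1}.

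The main obstacle I anticipate is not the algebra but the regularity bookkeeping: one must justify differentiating $u$ and applying the differential lemma under the weakest hypotheses the statement intends, and handle the singular weight $t^{\alpha-1}$ at $t=a$ (interpreting the integrals as convergent improper integrals when $a=0$). I would also observe that the monotone non-decrease of $f$ is not actually exploited by this integrating-factor route; it appears to be a vestigial hypothesis, perhaps retained for an alternative monotone-iteration argument in which the nested convolution powers of the kernel $p$ are resummed into the exponential. Finally, tracking constants shows a factor $\Gamma(\beta+1)$ survives in front of the outer integral, so I would flag that the conclusion \eqref{equ:leamma1} as written appears to be missing this prefactor.
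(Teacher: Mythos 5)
Your proof is correct, and it takes a genuinely cleaner route than the paper's. The paper never invokes the Bainov-type lemma: it sets $G(t)$ equal to the entire right-hand side of the hypothesis, differentiates with the truncated M-fractional derivative $D_\beta^\alpha$, multiplies by the integrating factor $\exp[-\Gamma(\beta+1)\int_a^t f(\tau)\tau^{\alpha-1}\,d\tau]$, and then applies the fractional integral $I_a^{\alpha,\beta}$ together with a fractional integration by parts. Since $D_\beta^\alpha$ and $I_a^{\alpha,\beta}$ are just the ordinary derivative and integral dressed with the weight $t^{1-\alpha}/\Gamma(\beta+1)$, both arguments embody the same integrating-factor idea; yours strips away the fractional notation, reduces everything to the classical differential Gronwall lemma already recorded in the preliminaries, and thereby avoids the bookkeeping that trips the paper up. Indeed, your closing flag is correct and is the most valuable part of the proposal: by the paper's own definition $I_a^{\alpha,\beta}h(t)=\Gamma(\beta+1)\int_a^t h(s)\,s^{\alpha-1}\,ds$, the step ``applying fractional-order integration to both sides'' must produce $\Gamma(\beta+1)\int_a^t\exp[\cdots]\,D_\beta^\alpha g(s)\,s^{\alpha-1}\,ds$, but the paper writes this integral without the factor $\Gamma(\beta+1)$; that is precisely where the constant is lost. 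Carrying the factor through, in either your derivation or the paper's, yields the sharp conclusion
\begin{equation*}
x(t)\le g(t)+\Gamma(\beta+1)\int_a^t g(s)f(s)\exp\!\left[\Gamma(\beta+1)\int_s^t f(\tau)\tau^{\alpha-1}\,d\tau\right]d_\alpha s .
\end{equation*}
Since $\Gamma(\beta+1)\le 1$ for $0<\beta\le 1$ and the integrand is non-negative, the stated bound (\ref{equ:leamma1}) remains a valid (weaker) consequence in that parameter range; but for $\beta>1$, where $\Gamma(\beta+1)>1$, the stated form is actually false in general --- the equality case $f\equiv g\equiv 1$ already violates it --- so the prefactor is not cosmetic. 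You are also right that the monotone non-decrease of $f$ is vestigial: the paper's own proof never uses it either.
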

\begin{proof}
	Set	$G(t) = g(t) + \Gamma (\beta  + 1)\int_a^t f (\tau )x(\tau ){\tau^{\alpha  - 1}}d\tau$,	then $G(a) = g(a)$ and $	x(t) \le G(t)$.
Upon calculating the fractional derivative of \( G(t) \) with respect to time on both sides, we obtain
	\begin{equation}
\begin{aligned}
	D_\beta^\alpha G(t) & =D_\beta^\alpha g(t)+D_\beta^\alpha \Gamma(\beta+1) \int_a^t f(\tau) x(\tau) \tau^{\alpha-1} d \tau \\
	& =D_\beta^\alpha g(t)+f(t) x(t).
\end{aligned}
	\end{equation}
Based on the inequality $x(t) \le G(t)$, it can be inferred that
	\begin{equation}
		D_\beta^\alpha G(t) \le D_\beta^\alpha g(t) + f(t)G(t).
		\label{daoshuxiaoyu}
	\end{equation}
By applying equation (\ref{daoshuxiaoyu}), we can multiply both sides by $\exp \left[ { - \Gamma (\beta  + 1)\int_a^t f (\tau ){\tau ^{\alpha  - 1}}d\tau } \right]$, one has
\begin{equation}
	\begin{aligned}
		& \exp \left[-\Gamma(\beta+1) \int_a^t f(\tau)(\tau-a)^{\alpha-1} d s\right] D_\beta^\alpha G(t) \\
		& \leq \exp \left[-\Gamma(\beta+1) \int_a^t f(\tau)(\tau-a)^{\alpha-1} d s\right]\left(D_\beta^\alpha g(t)+f(t) G(t)\right).
	\end{aligned}
\end{equation}
An additional level of organization can result in
\begin{equation}
	\begin{aligned}
		& \exp \left[{-\Gamma(\beta+1) \int_a^t f(\tau) \tau^{\alpha-1} d s} D_\beta^\alpha G(t)  \right]\\
		& -\exp \left[{-\Gamma(\beta+1) \int_a^t f(\tau)\tau^{\alpha-1} d s} f(t) G(t) \right] \\
			& =D_\beta^\alpha\left \{G(t) \exp \left[{-\Gamma(\beta+1) \int_a^t f(\tau) \tau^{\alpha-1} d \tau} \right ] \right\} \\
		& \leq \exp \left[ {-\Gamma(\beta+1) \int_a^t f(\tau) \tau^{\alpha-1} d \tau} D_\beta^\alpha g(t) \right].
	\end{aligned}
\label{derivate1}
\end{equation}
The properties of fractional-order integration are used to obtain
	\begin{equation}
	\begin{aligned}
		& \left.G(t) \exp \left[-\Gamma(\beta+1) \int_a^t f(\tau) \tau^{\alpha-1} d \tau\right]\right|_a ^t \\
		& =G(t) \exp \left[-\Gamma(\beta+1) \int_a^t f(\tau) \tau^{\alpha-1} d \tau\right]-G(a) \\
		& =G(t) \exp \left[-\Gamma(\beta+1) \int_a^t f(\tau) \tau^{\alpha-1} d \tau\right]-g(a)
	\end{aligned}
	\end{equation}
	Applying fractional-order integration to both sides of equation (\ref{derivate1}), we have
\begin{equation}
	\begin{aligned}
		& G(t) \exp \left[-\Gamma(\beta+1) \int_a^t f(\tau) \tau^{\alpha-1} d \tau\right] \\
		& \leq g(a)+\int_a^t \exp \left[-\Gamma(\beta+1) \int_a^s f(\tau) \tau^{\alpha-1} d \tau\right] D_\beta^\alpha g(s) d_\alpha s \\
		& =g(a)+\left.\exp \left[-\Gamma(\beta+1) \int_a^s f(\tau) \tau^{\alpha-1} d \tau\right] g(s)\right|_a ^t \\
		& -\int_a^t g(s) D_\beta^\alpha-\Gamma(\beta+1) \int_a^s f(\tau) \tau^{\alpha-1} d \tau d_\alpha s \\
		& =\exp \left[-\Gamma(\beta+1) \int_a^t f(\tau) \tau^{\alpha-1} d \tau\right] g(t) \\
		& -\int_a^t g(s) D_a^\alpha \exp \left[-\Gamma(\beta+1) \int_a^s f(\tau) \tau^{\alpha-1} d \tau\right] d_\alpha s \\
		& =\exp \left[-\Gamma(\beta+1) \int_a^t f(\tau) \tau^{\alpha-1} d \tau\right] g(t) \\
		& +\int_a^t g(s) f(s) \exp \left[-\Gamma(\beta+1) \int_a^s f(\tau) \tau^{\alpha-1} d \tau\right] d_\alpha s.
	\end{aligned}
\end{equation}
 Multiplying both sides by   $\exp [\Gamma (\beta  + 1)\int_a^t f (\tau ){\tau ^{\theta  - 1}}d\tau ]$, we have 
\begin{equation}
	\begin{aligned}
		& G(t) \leq g(t)+\exp \left[\Gamma(\beta+1) \int_a^t f(\tau) \tau^{\theta-1} d \tau\right] \\
		& \quad \quad \times \int_a^t g(s) f(s) \exp \left[-\Gamma(\beta+1) \int_a^s f(\tau) \tau^{\theta-1} d \tau\right] d_\alpha s.
	\end{aligned}
\end{equation}
Based on condition $x(t) \le G(t)$, one has
\begin{equation}
		\begin{aligned}
		&x(t) \le g(t) + \exp [\Gamma (\beta  + 1)\int_a^t f (\tau ){\tau ^{\theta  - 1}}d\tau ]\\
		& \quad \quad \times \int_a^t {g(s)f(s)\exp [ - \Gamma (\beta  + 1)\int_a^s f (\tau ){\tau ^{\theta  - 1}}d\tau ]} {d_\alpha }s.
\end{aligned}
\end{equation}
Our organization has been further enhanced by
\begin{equation}
	x(t) \le g(t) + \int_a^t {g(s)f(s)\exp \left\{ {\Gamma (\beta  + 1)\Xi } \right\}} {d_\alpha }s,
\end{equation}
where
\begin{equation}
	\Xi = \int_a^t f (\tau ){(\tau  - a)^{\theta  - 1}}d\tau  - \int_a^s f (\tau ){(\tau  - a)^{\theta  - 1}}d\tau.
\end{equation}
Based on the properties of the definite integral, the derivation of equation (\ref{equ:leamma1}) can be obtained through proper collation.
\end{proof}
\begin{theorem}
If ${y}(t)$ is viewed as the accumulated generating series for original sequence, $\hat{y}(t)$ is the fitted value of ${y}(t)$ and $f(t) = 1$. If
 \begin{equation}
 	 \hat{y}(t) - y(t) = \phi (t)>0,
 	 \label{accbudengshi} 
 \end{equation}
then
\begin{equation}
	x(t) \le \varphi (t) + \int_a^t \varphi  (s)\exp [\Gamma (\beta  + 1)\int_s^t {{\tau ^{\alpha  - 1}}d\tau } ]{d_\alpha }s.
	\label{newaccom}
\end{equation}
\label{theoremfgm}
\end{theorem}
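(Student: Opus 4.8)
The plan is to obtain Theorem~\ref{theoremfgm} as the special case $f(\tau)\equiv 1$ of the fractional Gronwall inequality proved in Theorem~\ref{newfireau}. Under this specialization the exponential kernel $\exp[\Gamma(\beta+1)\int_s^t f(\tau)\tau^{\alpha-1}d\tau]$ appearing in (\ref{equ:leamma1}) collapses to $\exp[\Gamma(\beta+1)\int_s^t \tau^{\alpha-1}d\tau]$, which is precisely the kernel in the target estimate (\ref{newaccom}). Hence the whole argument reduces to verifying that the error quantity $x(t)$ satisfies the integral hypothesis of Theorem~\ref{newfireau} with $f\equiv 1$ and with $g(t)$ identified as $\varphi(t)$; once this premise is in place, direct substitution into (\ref{equ:leamma1}) yields (\ref{newaccom}) immediately.

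To produce that premise, I would first translate the accumulated error $\phi(t)=\hat y(t)-y(t)>0$ into a relation for $x(t)$ through the grey dynamics. Both $y$ and its fitted counterpart $\hat y$ obey the truncated M-fractional grey model $D_\beta^\alpha y(t)+ay(t)=b$, and by the differentiation rule $D_\beta^\alpha y(t)=\tfrac{t^{1-\alpha}}{\Gamma(\beta+1)}y'(t)$ together with the defining accumulation $y(t)=\Gamma(\beta+1)\int_a^t x(s)s^{\alpha-1}ds$, one obtains the exact identity $D_\beta^\alpha y(t)=x(t)$. Inserting this into the model equation and subtracting the fitted relation converts the positivity of $\phi$ into a fractional differential inequality for $x(t)$. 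Integrating that inequality with the truncated M-fractional integral $I_a^{\alpha,\beta}$, and using the cancellation property $D_\beta^\alpha I_a^{\alpha,\beta}=\mathrm{id}$ established in the Property above, turns it into the integral form
\[
x(t)\le \varphi(t)+\Gamma(\beta+1)\int_a^t x(\tau)\tau^{\alpha-1}d\tau,
\]
where $\varphi(t)$ gathers the initial-value and source contributions carried by $\phi(t)$. This is exactly the hypothesis of Theorem~\ref{newfireau} with $f\equiv 1$ and $g=\varphi$.

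With the premise secured, the final step is mechanical: apply Theorem~\ref{newfireau} with $f(\tau)\equiv 1$ and $g(s)=\varphi(s)$, read off the collapsed kernel, and recover (\ref{newaccom}). I expect the main obstacle to lie entirely in the middle step, namely correctly bookkeeping how $\phi(t)$ and the grey parameters $a,b$ assemble into the function $\varphi(t)$, and justifying the passage from the differential inequality to the integral inequality within the truncated M-fractional calculus, in particular the sign handling that makes $f\equiv 1$ the operative multiplier. The Gronwall invocation itself adds no difficulty beyond this identification.
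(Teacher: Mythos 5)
Your overall strategy---reduce to Theorem~\ref{newfireau} with $f\equiv 1$ and $g=\varphi$, so that the kernel collapses to $\exp[\Gamma(\beta+1)\int_s^t \tau^{\alpha-1}d\tau]$---is exactly the paper's route, and your observation that $D_\beta^\alpha y(t)=x(t)$ (the cancellation $D_\beta^\alpha I_a^{\alpha,\beta}=\mathrm{id}$) is the right engine. But the middle step, which you yourself flag as carrying all the content, rests on a false premise: it is not true that both $y$ and $\hat y$ obey the grey model $D_\beta^\alpha y(t)+ay(t)=b$. Only the fitted series $\hat y$ satisfies the model equation; the accumulation $y$ of the raw data does not---that is precisely why a nonzero residual $\phi$ exists. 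If both satisfied the model with the same parameters, subtracting the two equations would force $D_\beta^\alpha\phi+a\phi=0$, pinning $\phi$ to a one-parameter family of solutions and making the hypothesis (\ref{accbudengshi}) essentially vacuous. Your expectation that $\varphi$ must ``gather'' contributions from the grey parameters $a,b$ is a symptom of this detour: the $\varphi$ appearing in the conclusion (\ref{newaccom}) carries no dependence on $a$ or $b$ whatsoever.

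The paper's middle step never touches the dynamics. It differentiates the residual identity $\hat y(t)-y(t)=\phi(t)$ directly and uses only the accumulation structure: since $y'(t)=\Gamma(\beta+1)x(t)t^{\alpha-1}$ (and likewise for $\hat y$ with $\hat x$), one obtains $\hat x(t)-x(t)=\frac{t^{1-\alpha}}{\Gamma(\beta+1)}\phi'(t)=:\varphi(t)$. Combining this with the accumulation relation then gives $\hat x(t)\le \varphi(t)+\Gamma(\beta+1)\int_a^t x(\tau)\tau^{\alpha-1}d\tau$, which is the hypothesis of Theorem~\ref{newfireau} with $f\equiv 1$ and $g=\varphi$, and the conclusion (\ref{newaccom}) follows by direct substitution. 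So your first and last steps survive intact, but the mechanism you propose for producing the integral inequality---subtracting model equations and then integrating with $I_a^{\alpha,\beta}$---would fail and must be replaced by this direct fractional differentiation of the residual.
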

\begin{proof}
Calculating the derivative of equation (\ref{accbudengshi}) on both sides yields the following result:
\begin{equation}
\hat{x}(t)-x(t)=\frac{t^{1-\alpha}}{\Gamma(\beta+1)} \phi^{\prime}(t).
\end{equation}
Then from (\ref{tmaD}) one has
\begin{equation}
\hat{x}(t) \le \varphi (t) + \Gamma (\beta  + 1)\int_a^t {x(\tau ){\tau ^{\alpha  - 1}}d\tau ,} 
\end{equation}
where $\frac{t^{1-\alpha}}{\Gamma(\beta+1)} \phi^{\prime}(t)=\varphi(t)$.
It follows from Theorem (\ref{newfireau}), one has
\begin{equation}
		x(t) \le \varphi (t) + \int_a^t \varphi  (s)\exp [\Gamma (\beta  + 1)\int_s^t {{\tau ^{\alpha  - 1}}d\tau } ]{d_\alpha }s.
\end{equation}
The proof of Theorem \ref{theoremfgm} is finished.
\end{proof}
This analysis indicates that the neural network model based on the tM$-$GM (1,1) model can be described as fractional-order
grey-informed neural network (FGINN), and the error function of this neural network is as follows:
\begin{equation}
	{L^{ALL}}(\theta ) = {L^{NN}}(\theta ) + \xi {L^{tM-FGM}}(\theta ).
	\label{allloss}
	\end{equation}
A schematic diagram illustrating the error fusion method between neural networks and grey system models can be found in Figure \ref{GMNN}.
The general grey information neural network modeling process is delineated in Algorithm 1. A flow chart illustrating the process from data collection to prediction can be found in Figure \ref{fig:frame1GINN}.
\begin{figure}
	\centering
	\includegraphics[width=0.7\linewidth]{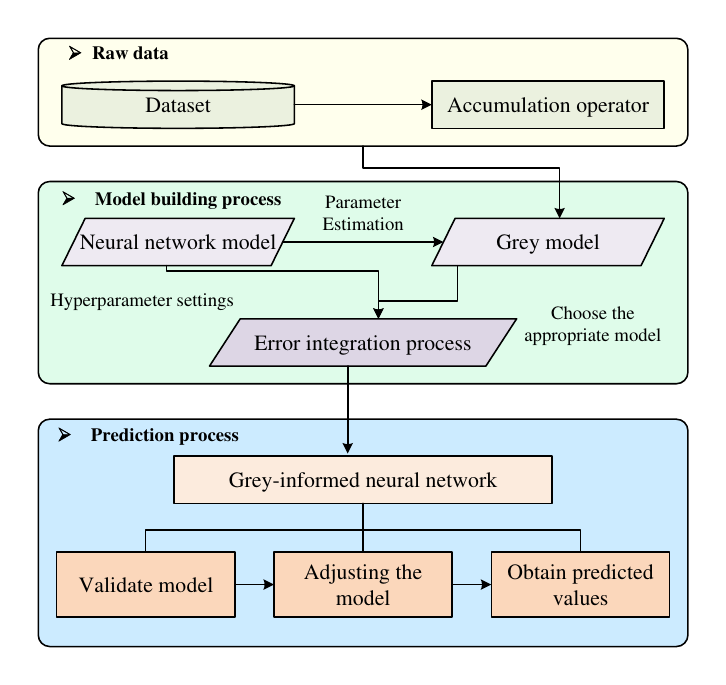}
	\caption{Framework of main research content: a comprehensive analysis.}
	\label{fig:frame1GINN}
\end{figure}

	\begin{algorithm}
	\caption{Grey-informed neural network}
	\KwIn{Training data $X$, Parameters of Grey Prediction Model $\theta$}
	\KwOut{Predicted values $\hat{Y}$}
	Initialize weights and biases in neural network\;
	Initialize learning rate $\eta$, iteration number $n_{iter}$\;
	\For{$i=1$ \KwTo $n_{iter}$}{
		Determine the prediction $\hat{Y}$  using the neural network with the current weights and biases\;
		Computer loss $L_{MSE}$ using Mean Squared Error between predicted $\hat{Y}$ and true $Y$\;
		Calculate loss $L_{Grey}$ using Grey Prediction Model with parameters $\theta$\;
		Obtain the total loss $L = L_{MSE} + L_{Grey}$\;
		Update weights and biases in neural network using gradient descent on $L$\;
	}
	\Return Predicted values $\hat{Y}$\;
				\label{algorithmGINN}
\end{algorithm}

\section{Verification of FGINN model}
In this section, we examine six practical examples in order to verify the effectiveness of the model. Table \ref{datadiscript} provides details of the relevant data. For the grey prediction model, the error term is not calculated using the square term, in order to avoid excessive error and difficulty in convergence. In this study, we employed the mean absolute error (MAE) as our error metric, which is calculated using the formula \(M{\rm{A}}{E_{GM}} = \frac{1}{n}\sum\limits_{i = 1}^{n} |{y_i} - {g_i}|\), with the weighting coefficient set to 0.1.
\begin{table}[htbp]  \scriptsize
	\caption{Information regarding the datasets obtained from the time series data library.}
	\centering
	\begin{tabular}{ll}
		\hline
		No. & Data Title \\
		\hline
		1 & Count of county hospitals (units) \\
		2 & Average number of health technical personnel per county hospital (persons) \\
		3 & Average number of beds per county maternal and child health institution (beds) \\
		4 & Average number of health technical personnel per county maternal and child health institution (persons) \\
		5 & Number of township health centers (units) \\
		6 & Proportion of rural doctors (\%) \\
		\hline
	\end{tabular}
	\caption*{\textbf{Note: }The time dimension of the data is from 2003 to 2022, with data from 2003 to 2018 used to fit the model and data from 2019 to 2022 used to test the performance of the data.}
	\label{datadiscript}
\end{table}

Utilizing a dataset obtained from the time series data library \footnote{http://stjj.guizhou.gov.cn/} spanning the years 2003 to 2022, a comparative analysis was conducted on various indices for models including FGINN, GINN, MLP \cite{zanchettin2011hybrid}, CFGM \cite{ma2020conformable}, FGM \cite{wu2013grey}, FHGM \cite{chen2020fractional}, GM \cite{mao2006application}, and DGM \cite{xie2009discrete}. The relevant experimental results are shown in Table \ref{tab:addtestpro}. The performance of the model was assessed through a comprehensive set of metrics, including Mean Absolute Percentage Error (MAPE), Mean Squared Error (MSE), Mean Absolute Error (MAE), and Root Mean Squared Error (RMSE). In most cases, GINN proved to be superior to grey prediction models or artificial neural network based on the experimental results. However, GINN's predictive performance may sometimes be lower than that of traditional models. In light of this, it is evident that GINN still has room for improvement. Based on the experimental findings, the FGINN model yielded the highest accuracy, achieving the lowest values across all evaluated metrics. This result underscores the model's precision for this particular test. This experiments further reinforced FGINN's dominance, consistently outperforming other models in terms of predictive accuracy across different datasets. A detailed analysis of the results revealed that FGINN consistently outperformed GINN across all six experiments, showcasing lower error values in MAPE, MSE, MAE, and RMSE. Notably, FGINN displayed a considerable improvement in predictive accuracy compared to GINN, as evidenced by consistently lower error values in all metrics across various datasets. For instance, in the third dataset, FGINN achieved a MAPE of 4.28894\% compared to GINN's 4.758044\%, highlighting FGINN's superior forecasting capabilities. The comprehensive comparison of error metrics underscores FGINN's robustness as a superior forecasting model compared to GINN. These findings emphasize the significance of advanced modeling techniques, like FGINN, in enhancing predictive accuracy and reliability in time series forecasting applications. Researchers and practitioners can leverage these insights to enhance forecasting methodologies and achieve more precise predictions in diverse domains.

\begin{table}[htbp] \scriptsize
	\centering
	\caption{Validation results of FGINN, GINN, GM, DGM, CFGM, FGM, FHGM and MLP with the benchmark data sets.}
	\begin{tabularx}{\textwidth}{XXXXXXXXXX}
		\toprule
		\multicolumn{1}{l}{Number} & Indices & \multicolumn{1}{l}{FGINN} & \multicolumn{1}{l}{GINN} & \multicolumn{1}{l}{MLP} & \multicolumn{1}{l}{CFGM} & \multicolumn{1}{l}{FGM} & \multicolumn{1}{l}{FHGM} & \multicolumn{1}{l}{GM } & \multicolumn{1}{l}{DGM} \\
		\midrule
		\multicolumn{1}{l}{1} & MAPE  & \textbf{0.62327} & 0.737393 & 0.786645 & 3.3247 & 3.1655 & 3.4552 & 4.8882 & 4.8484 \\
		& MSE   & 0.299435 & \textbf{0.28508} & 0.420492 & 4.6776 & 4.2505 & 5.0529 & 10.11 & 9.9481 \\
		& MAE   & \textbf{0.39554} & 0.470469 & 0.500042 & 2.1316 & 2.0294 & 2.2151 & 3.1332 & 3.1077 \\
		& RMSE & 0.547206 & \textbf{0.53393} & 0.648453 & 2.1628 & 2.0617 & 2.2479 & 3.1796 & 3.1541 \\
		&       &       &       &       &       &       &       &       &  \\
		\multicolumn{1}{l}{2} & MAPE  & \textbf{3.68056} & 3.70855 & 3.771024 & 14.177 & 17.517 & 14.177 & 14.177 & 14.563 \\
		& MSE   & \textbf{599.515} & 607.2719 & 625.7444 & 10954 & 14927 & 10954 & 10954 & 11390 \\
		& MAE   & \textbf{22.1485} & 22.31497 & 22.68692 & 86.791 & 106.57 & 86.791 & 86.791 & 89.085 \\
		& RMSE & \textbf{24.485} & 24.64289 & 25.01488 & 104.66 & 122.18 & 104.66 & 104.66 & 106.72 \\
		&       &       &       &       &       &       &       &       &  \\
		\multicolumn{1}{l}{3} & MAPE  & \textbf{4.28894} & 4.758044 & 5.53546 & 14.945 & 15.722 & 14.945 & 14.945 & 15.29 \\
		& MSE   & \textbf{14.226} & 16.36143 & 20.43448 & 147.36 & 160.61 & 147.36 & 147.36 & 153.69 \\
		& MAE   & \textbf{3.05837} & 3.406374 & 3.984379 & 11.451 & 12.025 & 11.451 & 11.451 & 11.711 \\
		& RMSE & \textbf{3.77174} & 4.044927 & 4.520451 & 12.139 & 12.673 & 12.139 & 12.139 & 12.397 \\
		&       &       &       &       &       &       &       &       &  \\
		\multicolumn{1}{l}{4} & MAPE  & \textbf{3.1733} & 3.224458 & 3.407433 & 13.998 & 5.2095 & 13.998 & 13.998 & 13.767 \\
		& MSE   & \textbf{33.7236} & 34.3879 & 37.52559 & 221.79 & 69.59 & 221.79 & 221.79 & 215.29 \\
		& MAE   & \textbf{3.65579} & 3.703505 & 3.879494 & 14.432 & 5.8065 & 14.432 & 14.432 & 14.203 \\
		& RMSE & \textbf{5.80721} & 5.864119 & 6.125814 & 14.893 & 8.3421 & 14.893 & 14.893 & 14.673 \\
		&       &       &       &       &       &       &       &       &  \\
		\multicolumn{1}{l}{5} & MAPE  & \textbf{0.7304} & 0.73226 & 0.731297 & 2.4883 & 2.4977 & 2.5206 & 3.1693 & 3.1678 \\
		& MSE   & 156.3734 & \textbf{155.9875} & 157.3502 & 1202.9 & 1210.9 & 1232.1 & 1914.1 & 1912.4 \\
		& MAE   & \textbf{10.0261} & 10.05231 & 10.03806 & 34.168 & 34.297 & 34.611 & 43.521 & 43.501 \\
		& RMSE & 12.50493 & \textbf{12.4895} & 12.54393 & 34.683 & 34.799 & 35.102 & 43.75 & 43.73 \\
		&       &       &       &       &       &       &       &       &  \\
		\multicolumn{1}{l}{6} & MAPE  & \textbf{8.27231} & 10.18479 & 9.245461 & 23.848 & 22.643 & 23.848 & 23.848 & 23.89 \\
		& MSE   & \textbf{104.757} & 135.5512 & 127.926 & 540.12 & 489.77 & 540.12 & 540.12 & 541.85 \\
		& MAE   & \textbf{7.44252} & 9.138746 & 8.332684 & 21.923 & 20.828 & 21.923 & 21.923 & 21.962 \\
		& RMSE & \textbf{10.2351} & 11.64264 & 11.31044 & 23.241 & 22.131 & 23.241 & 23.241 & 23.278 \\
		\bottomrule
	\end{tabularx}%
	    \captionsetup{labelformat=empty, singlelinecheck=off} 
	\caption*{\textbf{Note: }In the table, the configuration for GINN, FGINN, and MLP models includes 10 hidden layers each, with T = 2 and MSE as the error function. The learning rate is fixed at 0.001, and the models are trained for 2000 iterations. The orders FGM, FHGM, CFGM, etc. are determined using the PSO algorithm.}
	\captionsetup{singlelinecheck=on}
		\label{tab:addtestpro}%
\end{table}%
In summary, the recently introduced FGINN and GINN models exhibit distinct advantages in real-world modeling scenarios, boasting robust generalization capabilities and adeptly handling time series predictions within small-sample contexts. Our framework synergistically merges the nonlinear approximation prowess of neural networks with the strengths of grey prediction models, tailored for small-sample prediction tasks. Empirical comparisons revealed that FGINN outperforms GINN, suggesting that the integration of fractional calculus is efficacious and capable of a spectrum of time series prediction challenges.
\section{Applications and analysis}
This study aims to assess the forecasting capabilities of the FGINN and GINN models in predicting electricity consumption in four regions of China and to compare their performance against established machine learning models. The regions under consideration include Tianjin, Liaoning, Fujian and Gansu, with data spanning from 2000 to 2019 (measured in billions of kilowatt-hours). Table \ref{tab:tets} presents the experimental results. The dataset was divided into training and testing sets for model training and performance evaluation, respectively. Various evaluation metrics, including MAPE, MSE, MAE, and RMSE, were employed to gauge the accuracy and reliability of the models in predicting electricity consumption. The FGINN model consistently outperformed all competing models in terms of accuracy across all regions and metrics, with the lowest error values observed. This underscores the superior predictive capabilities of the FGINN model in forecasting electricity consumption patterns. Although the GINN model exhibited favorable performance in certain regions and metrics, it did not consistently match the accuracy levels achieved by the FGINN model. Specific examination of the data illustrates the supremacy of the FGINN model in delivering the lowest MAPE values across all regions, reflecting its precision in predicting percentage errors. For example, in Tianjin, the FGINN model achieved a remarkable MAPE of 3.78439\%, contrasting with the GINN model’s MAPE of 10.0289\% and those of other comparator models. Notably, the FGINN model exhibited the most favorable MSE outcomes, with minimal values recorded across all regions. In Tianjin, the FGINN model yielded an MSE of 1306.26, significantly surpassing the GINN model’s MSE of 7567.113 and the outputs of alternative models. Further validation of the FGINN model’s performance reveals its dominance in MAE values, indicating closest proximity to actual consumption data. In Tianjin, the FGINN model displayed an MAE of 31.193 in contrast with the GINN model’s MAE of 82.83217. Evaluation of RMSE values underscored the FGINN model’s consistent accuracy in predicting residuals’ standard error, particularly evidenced by the lowest values recorded across most regions. These results ascertain the FGINN model as a robust forecasting tool for electricity consumption, securing a prominent position in achieving excellent  accuracy and precision compared to competitor models. While the GINN model demonstrates promise, its potential can be further refined to approach the consistently superior performance exhibited by the FGINN model. These findings carry substantial implications for enhancing decision-making processes concerning energy utilization within the examined regions of China.

\begin{table*}[htbp] \scriptsize
	\centering
	\caption{Predicted errors of eight competing models in four regions.}
	\begin{tabularx}{\textwidth}{XXXXXXXXXX}
		\toprule
		\multicolumn{1}{l}{Region} & Indices & FGINN & GINN  & MLP   & CFGM  & FGM   & FHGM  & GM    & DGM \\
		\midrule
		\multicolumn{1}{l}{Tianjin} & MAPE  & \textbf{3.78439} & 10.0289 & 8.46356 & 22.697 & 11.641 & 23.241 & 29.55 & 29.617 \\
		& MSE   & \textbf{1306.26} & 7567.113 & 5416.489 & 38891 & 10155 & 40856 & 66663 & 66934 \\
		& MAE   & \textbf{31.193} & 82.83217 & 69.8782 & 191.17 & 97.879 & 195.78 & 249.13 & 249.69 \\
		& RMSE & \textbf{36.1422} & 86.98916 & 73.5968 & 197.21 & 100.77 & 202.13 & 258.19 & 258.72 \\
		&       &       &       &       &       &       &       &       &  \\
		\multicolumn{1}{l}{Liaoning} & MAPE  & \textbf{1.72701} & 1.767999 & 1.730181 & 14.399 & 3.4823 & 14.78 & 19.517 & 19.514 \\
		& MSE   & 2208.007 & \textbf{2111.49} & 2196.884 & 1.03E+05 & 7571.6 & 1.09E+05 & 1.93E+05 & 1.93E+05 \\
		& MAE   & \textbf{38.3682} & 39.34583 & 38.44592 & 319.46 & 74.133 & 328.12 & 435.04 & 434.94 \\
		& RMSE & 46.98943 & \textbf{45.9509} & 46.87092 & 320.54 & 87.015 & 329.42 & 439.45 & 439.3 \\
		&       &       &       &       &       &       &       &       &  \\
		\multicolumn{1}{l}{Fujian} & MAPE  & \textbf{1.60132} & 2.080047 & 1.658052 & 9.648 & 3.7342 & 11.798 & 19.008 & 19.179 \\
		& MSE   & \textbf{2537.6} & 2778.401 & 2365.615 & 48002 & 7336  & 72759 & 1.93E+05 & 1.97E+05 \\
		& MAE   & \textbf{36.3702} & 45.88663 & 37.66058 & 213.65 & 81.57 & 261.99 & 423.82 & 427.58 \\
		& RMSE & \textbf{50.3746} & 52.71054 & 48.63759 & 219.09 & 85.65 & 269.74 & 439.69 & 443.38 \\
		&       &       &       &       &       &       &       &       &  \\
		\multicolumn{1}{l}{Gansu} & MAPE  & \textbf{5.89752} & 5.903156 & 5.97287 & 19.689 & 14.038 & 20.15 & 27.423 & 27.55 \\
		& MSE   & \textbf{5277.98} & 5291.477 & 5506.729 & 57923 & 29292 & 60805 & 1.15E+05 & 1.16E+05 \\
		& MAE   & \textbf{70.711} & 70.78107 & 71.67993 & 236.5 & 167.66 & 242.19 & 331.27 & 332.79 \\
		& RMSE & \textbf{72.6497} & 72.74254 & 74.20734 & 240.67 & 171.15 & 246.59 & 338.93 & 340.42 \\
		\bottomrule
	\end{tabularx}%
	    \captionsetup{labelformat=empty, singlelinecheck=off} 
\caption*{\textbf{Note: } In the table, each of the GINN, FGINN, and MLP models has 10 hidden layers, with T = 2 and MSE as the error function. Models are trained for 2000 iterations at a learning rate of 0.001. Based on the PSO algorithm, the orders FGM, FHGM, CFGM, etc. are determined.}
\captionsetup{singlelinecheck=on}
	\label{tab:tets}%
\end{table*}%

\begin{figure}
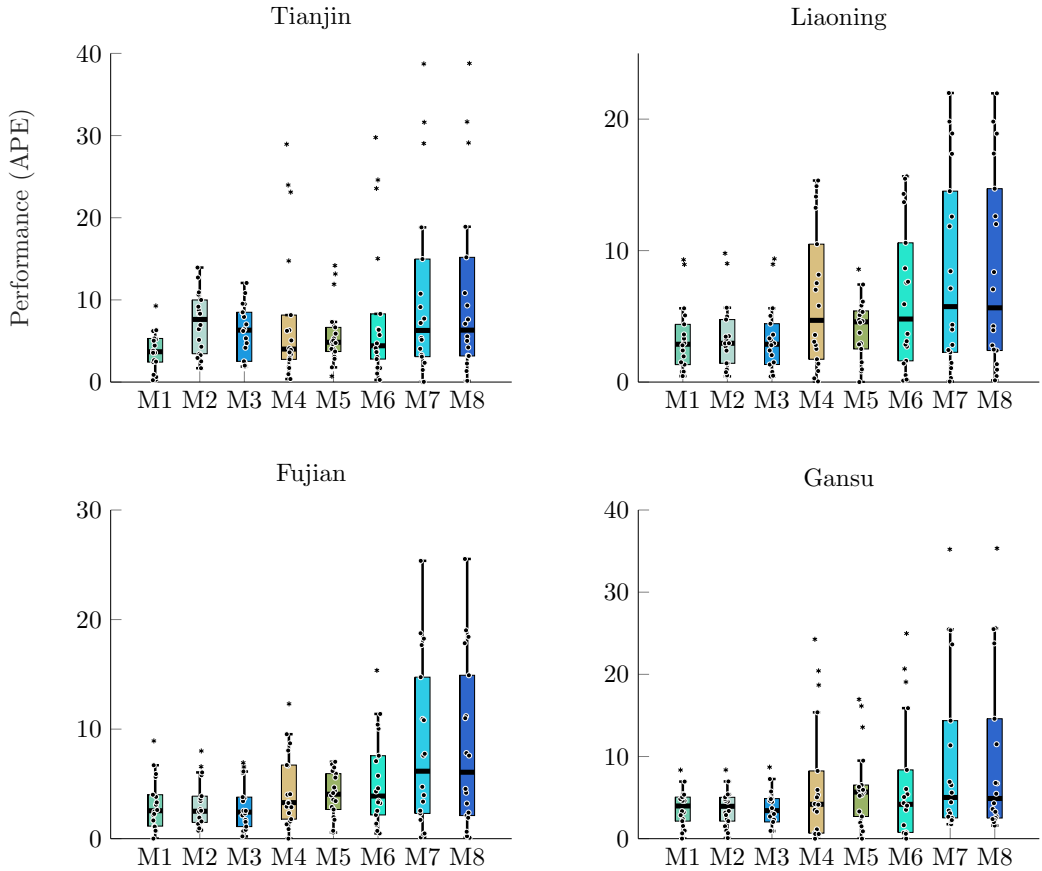

	\centering
	%
	%
	%
%
%
%
%
\definecolor{mycolor1}{rgb}{0.45000,0.80000,0.69000}%
\definecolor{mycolor2}{rgb}{0.70000,0.85000,0.82000}%
\definecolor{mycolor3}{rgb}{0.11000,0.60000,0.88000}%
\definecolor{mycolor4}{rgb}{0.85000,0.75000,0.50000}%
\definecolor{mycolor5}{rgb}{0.60000,0.70000,0.40000}%
\definecolor{mycolor6}{rgb}{0.15000,0.90000,0.80000}%
\definecolor{mycolor7}{rgb}{0.18000,0.80000,0.90000}%
\definecolor{mycolor8}{rgb}{0.18000,0.40000,0.80000}%
%
	\caption{An illustration of each point error for all models is shown in a box plot. These include "M1", "M2", "M3", "M4", "M5", "M6", "M7", and "M8" which represent FGINN, GINN, MLP, CFGM, FGM, FHGM, GM, and DGM, respectively.}
\end{figure}

\section{Conclusion}
\label{s5}
As a result, the use of grey-informed neural networks addresses the challenges associated with black-box neural network models in scenarios with limited sample sizes. GINN enhances interpretability and the ability of traditional neural networks to handle small samples by incorporating the differential equation model of grey systems. Its effectiveness in mitigating data scarcity issues in neural network modeling can be demonstrated by its use of potential underlying laws in the real world to make reasonable predictions based on actual data.
There are a number of areas that warrant further exploration in this study, even though it presents a novel approach for developing a grey neural network. Firstly, determining the optimal ratio of error terms in neural networks and grey prediction models is a promising research direction. In this paper, we are solely concerned with univariate prediction models, whereas grey models have distinct characteristics that are capable of capturing a wide range of patterns. Future research should focus on selecting an appropriate grey prediction model tailored to specific real-world problems.

\noindent\textbf{Declaration of competing interest}

The authors declare that they have no known competing financial interests or personal relationships that could have appeared to
influence the work reported in this paper.

\noindent\textbf{Data and code availability}

The code is available in \textcolor{blue}{https://github.com/Sunflowersandshells/GINN/tree/master}.

\noindent\textbf{Acknowledgments}

This work is supported by the Science and Technology Plan Project in Chuzhou  (No. 2021ZD016), the Research Projects of the Chuzhou University (No. 2022XJZD14), and the Anhui Province's University Research Projects (No. 2023AH040216).

\bibliographystyle{elsarticle-num}
\bibliography{greybib}

\begin{thebibliography}{10}
\expandafter\ifx\csname url\endcsname\relax
  \def\url#1{\texttt{#1}}\fi
\expandafter\ifx\csname urlprefix\endcsname\relax\def\urlprefix{URL }\fi
\expandafter\ifx\csname href\endcsname\relax
  \def\href#1#2{#2} \def\path#1{#1}\fi

\bibitem{ismail2019deep}
H.~Ismail~Fawaz, G.~Forestier, J.~Weber, L.~Idoumghar, P.-A. Muller, Deep
  learning for time series classification: a review, Data mining and knowledge
  discovery 33~(4) (2019) 917--963.

\bibitem{wang2020deep}
Z.~Wang, J.~Chen, S.~C. Hoi, Deep learning for image super-resolution: A
  survey, IEEE transactions on pattern analysis and machine intelligence
  43~(10) (2020) 3365--3387.

\bibitem{bai2021speaker}
Z.~Bai, X.-L. Zhang, Speaker recognition based on deep learning: An overview,
  Neural Networks 140 (2021) 65--99.

\bibitem{bottcher2022ai}
L.~B{\"o}ttcher, N.~Antulov-Fantulin, T.~Asikis, Ai pontryagin or how
  artificial neural networks learn to control dynamical systems, Nature
  communications 13~(1) (2022) 333.

\bibitem{wang2022improved}
Y.~Wang, X.~Du, Z.~Lu, Q.~Duan, J.~Wu, Improved lstm-based time-series anomaly
  detection in rail transit operation environments, IEEE Transactions on
  Industrial Informatics 18~(12) (2022) 9027--9036.

\bibitem{wei2023nonlinear}
B.~Wei, L.~Yang, N.~Xie, Nonlinear grey bernoulli model with physics-preserving
  cusum operator, Expert Systems with Applications 229 (2023) 120466.

\bibitem{xie2023novel}
W.~Xie, C.~Liu, W.-Z. Wu, A novel fractional grey system model with
  non-singular exponential kernel for forecasting enrollments, Expert Systems
  with Applications 219 (2023) 119652.

\bibitem{chen2003traffic}
S.-Y. Chen, G.-F. Qu, X.-H. Wang, H.-Z. Zhang, Traffic flow forecasting based
  on grey neural network model, in: Proceedings of the 2003 International
  Conference on Machine Learning and Cybernetics (IEEE Cat. No. 03EX693),
  Vol.~2, IEEE, 2003, pp. 1275--1278.

\bibitem{wu2021combined}
F.~Wu, R.~Jing, X.-P. Zhang, F.~Wang, Y.~Bao, A combined method of improved
  grey bp neural network and meemd-arima for day-ahead wave energy forecast,
  IEEE Transactions on Sustainable Energy 12~(4) (2021) 2404--2412.

\bibitem{lei2021neural}
D.~Lei, K.~Wu, L.~Zhang, W.~Li, Q.~Liu, Neural ordinary differential grey model
  and its applications, Expert Systems with Applications 177 (2021) 114923.

\bibitem{ma2021novel}
X.~Ma, M.~Xie, J.~A. Suykens, A novel neural grey system model with bayesian
  regularization and its applications, Neurocomputing 456 (2021) 61--75.

\bibitem{xie2023fractional}
W.~Xie, W.-Z. Wu, Z.~Xu, C.~Liu, K.~Zhao, The fractional neural grey system
  model and its application, Applied Mathematical Modelling (2023).

\bibitem{he2023neural}
J.~He, S.~Mao, A.~K. Ng, Neural computing for grey richards differential
  equation to forecast traffic parameters with various time granularity,
  Neurocomputing 549 (2023) 126394.

\bibitem{raissi2019physics}
M.~Raissi, P.~Perdikaris, G.~E. Karniadakis, Physics-informed neural networks:
  A deep learning framework for solving forward and inverse problems involving
  nonlinear partial differential equations, Journal of Computational physics
  378 (2019) 686--707.

\bibitem{du2023research}
L.~Du, Z.~Wang, Z.~Lv, L.~Wang, D.~Han, Research on underwater acoustic field
  prediction method based on physics-informed neural network, Frontiers in
  Marine Science 10 (2023) 1302077.

\bibitem{li2024physics}
D.~Li, B.~Zhao, S.~Lu, J.~Wang, Physics-informed dynamic mode decomposition for
  short-term and long-term prediction of gas-solid flows, Chemical Engineering
  Science (2024) 119849.

\bibitem{iglesias2024causally}
F.~Iglesias-Suarez, P.~Gentine, B.~Solino-Fernandez, T.~Beucler, M.~Pritchard,
  J.~Runge, V.~Eyring, Causally-informed deep learning to improve climate
  models and projections, Journal of Geophysical Research: Atmospheres 129~(4)
  (2024) e2023JD039202.

\bibitem{vanterler2018new}
J.~Vanterler, D.~Sousa, E.~Capelas, D.~Oliveira, A new truncated m-fractional
  derivative type unifying some fractional derivative types with classical
  properties, Int. J. Anal. Appl 16~(1) (2018) 83--96.

\bibitem{liu2016new}
S.~Liu, Y.~Yang, N.~Xie, J.~Forrest, New progress of grey system theory in the
  new millennium, Grey Systems: Theory and Application 6~(1) (2016) 2--31.

\bibitem{Bainov1992}
D.~Bainov, P.~Simeonov, Integral inequalities application, New York: Springer
  (1992).

\bibitem{wang2018improved}
J.~Wang, P.~Du, H.~Lu, W.~Yang, T.~Niu, An improved grey model optimized by
  multi-objective ant lion optimization algorithm for annual electricity
  consumption forecasting, Applied Soft Computing 72 (2018) 321--337.

\bibitem{zanchettin2011hybrid}
C.~Zanchettin, T.~B. Ludermir, L.~M. Almeida, Hybrid training method for mlp:
  optimization of architecture and training, IEEE Transactions on Systems, Man,
  and Cybernetics, Part B (Cybernetics) 41~(4) (2011) 1097--1109.

\bibitem{ma2020conformable}
X.~Ma, W.~Wu, B.~Zeng, Y.~Wang, X.~Wu, The conformable fractional grey system
  model, ISA transactions 96 (2020) 255--271.

\bibitem{wu2013grey}
L.~Wu, S.~Liu, L.~Yao, S.~Yan, D.~Liu, Grey system model with the fractional
  order accumulation, Communications in Nonlinear Science and Numerical
  Simulation 18~(7) (2013) 1775--1785.

\bibitem{chen2020fractional}
C.~Yan, W.~Lifeng, L.~Lianyi, Z.~Kai, Fractional hausdorff grey model and its
  properties, Chaos, Solitons \& Fractals 138 (2020) 109915.

\bibitem{mao2006application}
M.~Mao, E.~C. Chirwa, Application of grey model gm (1, 1) to vehicle fatality
  risk estimation, Technological Forecasting and Social Change 73~(5) (2006)
  588--605.

\bibitem{xie2009discrete}
N.-m. Xie, S.-f. Liu, Discrete grey forecasting model and its optimization,
  Applied mathematical modelling 33~(2) (2009) 1173--1186.

\end{thebibliography}
\end{document}